\newtheorem{theorem*}{Theorem}
\newtheorem{theorem}{Theorem}
\newtheorem{proof}{proof}
\definecolor{cvprblue}{rgb}{0.21,0.49,0.74}
\title{Inference-Scale Complexity in ANN-SNN Conversion for High-Performance and Low-Power Applications \vspace{-10pt} }
\author{Tong Bu\textsuperscript{1\dag}, Maohua Li\textsuperscript{2\dag}, Zhaofei Yu\textsuperscript{1\rm *} \\
{\small \textsuperscript{\dag} Equal contribution, \textsuperscript{*} Corresponding author}\\
{\tt\small putong30@pku.edu.cn, maohuali@hhu.edu.cn, yuzf12@pku.edu.cn}
}
\begin{document}
\maketitle
\renewcommand{\thefootnote}{}
\footnote{\textsuperscript{1} Institute for Artificial Intelligence, Peking University}
\footnote{\textsuperscript{2} School of Artificial Intelligence and Automation, Hohai University}

\begin{abstract}
Spiking Neural Networks (SNNs) have emerged as a promising substitute for Artificial Neural Networks (ANNs) due to their advantages of fast inference and low power consumption. However, the lack of efficient training algorithms has hindered their widespread adoption. Even efficient ANN-SNN conversion methods necessitate quantized training of ANNs to enhance the effectiveness of the conversion, incurring additional training costs. To address these challenges, we propose an efficient ANN-SNN conversion framework with only inference scale complexity. The conversion framework includes a local threshold balancing algorithm, which enables efficient calculation of the optimal thresholds and fine-grained adjustment of the threshold value by channel-wise scaling. We also introduce an effective delayed evaluation strategy to mitigate the influence of the spike propagation delays. We demonstrate the scalability of our framework in typical computer vision tasks: image classification, semantic segmentation, object detection, and video classification. Our algorithm outperforms existing methods, highlighting its practical applicability and efficiency. Moreover, we have evaluated the energy consumption of the converted SNNs, demonstrating their superior low-power advantage compared to conventional ANNs. This approach simplifies the deployment of SNNs by leveraging open-source pre-trained ANN models, enabling fast, low-power inference with negligible performance reduction. Code is available at 
\href{https://github.com/putshua/Inference-scale-ANN-SNN}{https://github.com/putshua/Inference-scale-ANN-SNN}.
\end{abstract}

\section{Introduction}
\begin{figure}[t]
\centering
\includegraphics[width=0.48\textwidth]{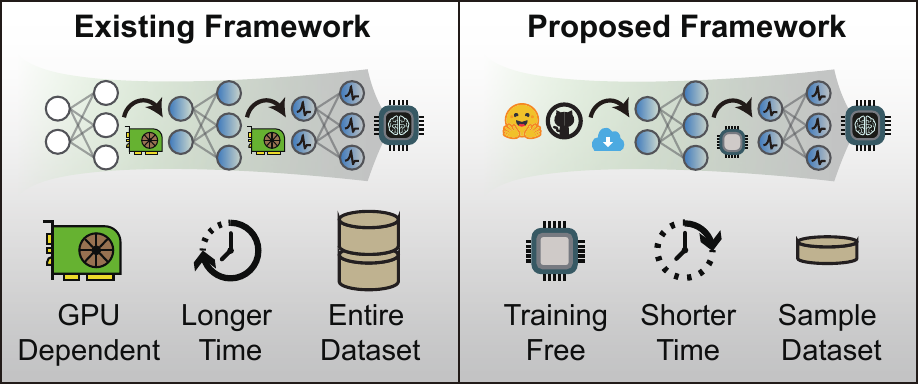}
\caption{Compared to existing frameworks that require retraining a quantized ANN, the proposed framework is able to directly convert a pre-trained ANN to an SNN at inference-scale complexity, significantly reducing computational requirements, minimizing dependence on GPUs, and requiring only a small subset of the original dataset.}
\label{fig:cw}
\vspace{-5pt}
\end{figure}
Recent advancements in large models have significantly reshaped the landscape of deep learning technology, industry, and the research community. These advanced models, characterized by their massive scale and unprecedented zero-shot generalizability in downstream tasks, greatly impact our daily lives. Nevertheless, the pursuit of larger models raises concerns about high energy consumption for model inference and training. The deployment of large models on resource-constrained devices has also become a challenge. 

Spiking neural networks (SNNs), a well-established type of neural network that mimic the function of biological neurons and are considered the third generation of ANNs~\citep{maas1997networks}, offer a potential solution to these issues as an alternative to Artificial Neural Networks (ANNs). 
Unlike ANNs, spiking neurons encode information as discrete events (binary spikes or action potentials) and generate outputs based on both their inputs and internal state. This unique processing mechanism enables SNNs to effectively handle spatio-temporal information while leveraging sparse representations, leading to more efficient and biologically plausible information processing.

With the development of the neuromorphic computing hardware~\citep{pei2019towards, debole2019truenorth, davies2018loihi, nicolas2021sparse, fang2020spike, zenke2021brain, yao2024spike}, SNNs can now be deployed on neuromorphic chips and further applied in power-limited scenarios~\citep{chen2022state,shen2023ESL}. However, the lack of efficient training algorithms has hindered their widespread application. Recent learning methods have made significant progress in training deep convolutional SNNs~\citep{duantemporal, huhigh} and large spiking transformers using supervised learning algorithms~\citep{zhu2023spikegpt, shi2024spikingresformer, yaospike}. 
While supervised training enables SNNs to achieve performance comparable to ANNs with the same architecture and reduces inference time, the memory and time costs for training scale linearly with the number of inference time-steps, making it impractical for training large energy-efficient models.
A more feasible approach is ANN-to-SNN conversion~\citep{cao2015spiking}, which transforms pre-trained ANNs into SNNs with minimal computational overhead. However, converted SNNs typically require more inference time-steps to match the performance of their ANN counterparts, leading to increased latency and higher energy consumption. A potential solution is to re-train a modified ANN before conversion, which can significantly enhance SNN performance while reducing inference time-steps~\citep{hwang2021low, bu2022optimized}. Although retraining-based conversion methods generally improve performance across most tasks, the additional computational cost of re-training poses a notable overhead.

In this paper, we propose an efficient post-training ANN-SNN conversion algorithm that directly obtains high-performance SNNs from pre-trained ANNs with inference-scale complexity. Figure~\ref{fig:cw} illustrates the key difference between existing ANN-to-SNN conversion frameworks and our proposed conversion framework. Compared to training-required methods, our approach greatly reduces the need for GPU resources, requires fewer data samples, and shortens the overall conversion process. 
We employ a local-learning-rule based algorithm for rapid conversion from ANNs to SNNs, avoiding the requirements for computationally expensive backpropagation-based training or fine-tuning. Additionally, we introduce a delayed evaluation strategy to further enhance performance. This conversion framework simplifies the deployment of SNNs by leveraging open-source, pre-trained ANN models, allowing direct application of converted SNNs on neuromorphic hardware with only inference-level computational costs. 
Overall, the proposed conversion framework offers two potential advantages. First, the lightweight local learning algorithm reduces dependence on GPUs, enabling online mapping from ANN to SNN on neuromorphic chips. Second, this approach reduces the need for extensive retraining of models, making it highly suitable for the practical deployment of large models in resource-constrained environments. 
Our contribution can be summarized as follows:
\begin{itemize}
    \item We introduce an inference-scale complexity ANN-SNN conversion framework, enabling the conversion of pre-trained ANN models into high-performance SNNs while maintaining computational costs at inference level or even lower.
    \item We propose an efficient ANN-SNN conversion framework, theoretically deriving an upper bound for the conversion error between the original ANN and the converted SNN. To minimize this error bound, we introduce a light-weight local learning method for threshold optimization. Additionally, we design a delayed evaluation strategy to mitigate the influence of spike propagation delays.
    \item We experimentally demonstrate that our method is a light-weight, plug-and-play solution capable of integrating with various conversion methods for further performance improvement. Our framework successfully scales to three typical computer vision tasks, outperforming existing ANN-SNN conversion methods in classification tasks while also demonstrating its practical applicability in handling regression-based vision tasks.
\end{itemize}

\section{Related Works}
Researchers have endeavored to explore effective SNN training approaches for decades. Early studies mainly focused on the unsupervised Spike-Timing-Dependent Plasticity (STDP) algorithm~\citep{hebb2005organization, caporale2008spike}. These studies aimed to utilize the STDP algorithm for shallow networks and feature extractors~\citep{song2000competitive, masquelier2007unsupervised, wade2010swat, diehl2015unsupervised, kheradpisheh2018stdp}. 
In parallel, supervised training approaches have been developed to achieve high-performance SNNs. \citet{bohte2000spikeprop} were pioneers in applying the backpropagation algorithm to train SNNs, introducing a temporal-based backpropagation method that uses timestamps as intermediate variables during gradient computation. \citet{wu2018STBP} treated SNNs as recurrently connected networks and utilized Back Propagation Through Time (BPTT) for training, termed spatial-temporal backpropagation (STBP). Unlike these methods, \citet{cao2015spiking} proposed an ANN-SNN conversion method that achieves high-performance SNNs from pre-trained ANNs. Currently, research efforts are primarily focused on enhancing the efficiency of training algorithms for high-performance SNNs, leveraging the three methods mentioned above.

Based on whether global backpropagation is required, SNN learning methods can be ascribed into post-training and training-involved approaches. 
Training-required methods, including direct training of SNNs \citep{fang2020spike, wu2018STBP, shrestha2018slayer, neftci2019surrogate, wu2019direct, fang2021incorporating, jiangtab, yang2022training}, re-training before ANN-SNN conversion \citep{bu2021optimal, deng2020optimal, li2024seenn, jiang2023unified, hao2023bridging, huang2024towards, wu2024ftbc} and fine-tuning of converted SNNs~\citep{rathi2020enabling, rathi2021diet, li2024error, bojkovic2024data}, typically require the participation of the back-propagation algorithms, making them more computationally intensive and resource-demanding. 
Post-training learning methods mainly refer to conversion-based algorithms that directly convert the pre-trained ANN into SNN. \citet{cao2015spiking} pioneered this approach by mapped the weights of a lightweight CNN network to an SNN, demonstrating significant energy efficiency improvements through hardware analysis.
\citet{diehl2015fast} proposed a weight and threshold balancing method to mitigate performance degradation, while 
\citet{hunsberger2015spiking} further extended conversion-based learning to LIF neurons. 
\citet{rueckauer2017conversion} established a theoretical framework for conversion-based algorithms, leading to further optimizations such as robust normalization, an enhanced threshold balancing method designed to optimize the trade-off between inference latency and accuracy~\citep{sengupta2019going}. 
\citet{han2020deep} extended ANN-to-SNN conversion to temporal coding SNNs, improving accuracy and introducing a reset-by-subtraction mechanism instead of the traditional reset-to-zero, thereby preventing information loss~\citep{Han_2020_CVPR}.
Beyond classification tasks, conversion-based methods have been successfully applied to regression problems such as object detection, with \citet{kim2020spiking} demonstrating the feasibility of spike-based object detection models. Further optimizations include adding an initial membrane potential for improved accuracy and reduced latency~\citep{bu2022optimized, hao2023reducing}, as well as employing a lightweight conversion pipeline to optimize thresholds and introduce a bias term at each inference step~\citep{li2021free}. 
Although performance is limited, those light-weight training methods are able to acquire SNNs from pre-trained ANNs with negligible cost of multiple iterations of inference.

\section{Preliminaries}
\subsection{Neuron Models}
The core components of conventional ANNs are point neurons, where their forward propagation can be represented as a combination of linear transformations and non-linear activations, that is 
\begin{align}
    \bm{a}^l=A\left(\bm{w}^l \bm{a}^{l-1}\right). \label{eq:relu}
\end{align}
Here, $\bm{w}^l$ is the weights between layer $l-1$ and layer $l$, $\bm{a}^l$ is the activation vector in layer $l$, and $A(\cdot)$ denotes the non-linear activation function.

The neurons in SNNs are spiking neurons with temporal structures. Like most ANN-SNN conversion methods ~\citep{cao2015spiking, bu2021optimal, diehl2015fast, sengupta2019going, brette2007simulation, deng2020optimal}, we employ the Integrate-and-Fire (IF) neuron ~\citep{izhikevich2004model,gerstner2014neuronal} in this paper. 
For computational convenience, we use the following discrete neuron function:
\begin{align}
\bm{v}^l(t^-)&=\bm{v}^l({t-1})+\bm{w}^{l}\bm{s}^{l-1}(t)\theta^{l-1}, \label{21}\\ 
\bm{s}^l(t)&= H(\bm{v}^l(t^-)-\theta^l),\\
\bm{v}^l(t)&= \bm{v}^l({t}^-)-\theta^l \bm{s}^l(t). \label{23}
\end{align}
Here $\bm{s}^l(t)$ describes whether neurons in layer $l$ fire at time-step $t$, with firing triggered when the membrane potential before firing $\bm{v}^l(t^-)$ reaches the firing threshold $\theta^l$, as represented by the Heaviside step function $H(\cdot)$. 
To alleviate information loss, we adopt the "reset-by-subtraction" mechanism~\citep{Han_2020_CVPR, Bodo2017Conversion} for updating the membrane potential $\bm{v}^l(t)$ after spike emission. Specifically, instead of resetting to the resting potential, the membrane potential is reduced by $\theta^l$ after emitting a spike.




\subsection{ANN-SNN Conversion}
The pivotal idea of ANN-SNN conversion is to map the activation of analog neurons to the postsynaptic potential (or firing rate) of spiking neurons. Specifically, by defining the intial membrane potential as $\bm{v}^{l}(0)$ and accumulating the neuron function (Equations~\ref{21} and \ref{23}) from 1 to T and dividing both sides by time-step T, we obtain:
\begin{align}
    \frac{\bm{v}^{l}(T)-\bm{v}^{l}(0)}{T}=\frac{\sum_{t=1}^{T} \bm{w}^{l} \bm{s}^{l-1}(t) \theta^{l-1} - \sum_{t=1}^{T}  \bm{s}^{l}(t)\theta^{l}}{T} . \label{eq:psp}
\end{align}
This equation reveals a linear relationship between the average firing rate of neurons in adjacent layers by defining average postsynaptic potential as $\bm{r}^l(t) = \sum_{i=1}^t \bm{s}^l(i) \theta^l /t$:
\begin{align}
    \bm{r}^l(T)=\bm{w}^l \bm{r}^{l-1}(T)-\frac{\bm{v}^l(T)-\bm{v}^l(0)}{T}. \label{eq:pot}
\end{align}
As $T \rightarrow +\infty$, the conversion error can generally be assumed to approach zero. Therefore, using the equations presented, a trained ANN model can be converted into an SNN by replacing ReLU neurons with IF neurons, which forms the fundamental principle of ANN-SNN conversion. Since Equations~\ref{eq:relu} and~\ref{eq:pot} are not mathematically identical, some conversion error typically remains. 
To mitigate these errors, threshold balancing~\citep{diehl2015fast} and weight scaling algorithms~\citep{li2021free, sengupta2019going} play a crucial role. Both techniques aim to adjust synaptic weights or neuron thresholds according to the distribution range of neuron inputs, effectively reducing clipping errors. Previous studies have shown that weight scaling and threshold balancing are equivalent in effect and can achieve high-performance SNNs~\citep{bu2022optimized}.



\section{Methods}
In this section, we present our post-training ANN-SNN conversion framework. We begin by defining the conversion error and deriving its upper bound. After that, we propose a local threshold balancing method based on a local learning rule. Additionally, we introduce the channel-wise threshold balancing technique, the pre-neuron max pooling layer, and the delayed evaluation strategy. These techniques are seamlessly integrated into a comprehensive conversion framework to enhance both performance and efficiency.

\subsection{Conversion Error Bound}
We follow the conversion error representation from \citep{bu2021optimal} and define the conversion error $e^l$ in layer $l$ as the 2-norm of the difference between the postsynaptic potential of neurons in SNNs and the activation output of neurons in ANNs, that is 
\begin{align}
    e^l &= \lVert \text{S}(\bm{\hat{z}}^l; \theta^l) - \text{A}(\bm{z}^l)\rVert_2.
\end{align}
Here, $\bm{\hat{z}}^l = \bm{w}^l \bm{r}^{l-1}$ denotes the average input current from layer $l-1$ in SNNs, and $\bm{z}^l = \bm{w}^l \bm{a}^{l-1}$ represents the activation input from layer $l-1$ in ANNs. Both the ANN and SNN models share the same weights, denoted as $\bm{w}^l$. ${\text{A}}(\bm{z}^l)$ represents the output of nonlinear ReLU activation function while ${\text{S}}(\bm{\hat{z}}^l; \theta^l)=\bm{r}^l(T)$ represents the average output synaptic potential at time-step $T$ with given average input $\bm{\hat{z}}^l$. 

Since our primary objective is to minimize the error of the final outputs, we define the conversion error between the ANN and SNN models as the layer-wise conversion error in the last layer $L$, denoted as $e_\text{model} = e^L$. A straightforward approach would be to directly use the conversion error between models as a loss function and optimize it, which can be viewed as a variant of knowledge distillation \citep{yang2022training}. However, this method is computationally expensive in terms of both time and memory, resembling the challenges of supervised training of SNN.
To address this, we scale the conversion error and derive an error upper bound, thereby simplifying the optimization process. We formalize this approach in the following theorem.


\begin{theorem}
The layer-wise conversion error can be divided into intra-layer and inter-layer errors:
\begin{align}
    e^l &\leqslant \overbrace{\lVert \textnormal{S}(\bm{\hat{z}}^l; \theta^l) - \textnormal{A}(\bm{\hat{z}}^l)\rVert_2}^\textnormal{intra-layer~error} + \overbrace{\lVert \bm{w}^l \rVert_2 e^{l-1}}^\textnormal{inter-layer error}.
\end{align}
Given that both ANN and SNN models receive the same input in the first layer, leading to $e^0 = 0$, the upper bound for the conversion error between ANN and SNN models in an $L$-layer fully-connected network is given by
\begin{align}
    e_\textnormal{model}=e^L &\leqslant \sum_{l=1}^{L} \left (\prod_{k=l+1}^{L} \lVert \bm{w}^k \rVert_2\right ) \left \lVert \textnormal{S}(\bm{\hat{z}}^l; \theta^l) - \textnormal{A}(\bm{\hat{z}}^l) \right \rVert_2
\end{align}
\end{theorem}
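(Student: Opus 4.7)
The plan is to prove the intra/inter-layer decomposition first and then unroll the recursion. For the first inequality, the natural move is to insert the ``hybrid'' term $\text{A}(\bm{\hat{z}}^l)$, i.e.\ the ReLU activation evaluated at the SNN input rather than the ANN input. Then by the triangle inequality,
\begin{equation*}
e^l = \lVert \text{S}(\bm{\hat{z}}^l; \theta^l) - \text{A}(\bm{\hat{z}}^l) + \text{A}(\bm{\hat{z}}^l) - \text{A}(\bm{z}^l)\rVert_2 \leq \lVert \text{S}(\bm{\hat{z}}^l; \theta^l) - \text{A}(\bm{\hat{z}}^l)\rVert_2 + \lVert \text{A}(\bm{\hat{z}}^l) - \text{A}(\bm{z}^l)\rVert_2,
\end{equation*}
which already isolates the intra-layer quantization-style error at the inputs the SNN actually sees.

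Next I would handle the second term. Since ReLU acts componentwise and is $1$-Lipschitz, applying it to two vectors can only shrink their Euclidean distance, so $\lVert \text{A}(\bm{\hat{z}}^l) - \text{A}(\bm{z}^l)\rVert_2 \leq \lVert \bm{\hat{z}}^l - \bm{z}^l\rVert_2$. Substituting the definitions $\bm{\hat{z}}^l = \bm{w}^l \bm{r}^{l-1}$ and $\bm{z}^l = \bm{w}^l \bm{a}^{l-1}$ and applying the operator-norm inequality $\lVert \bm{w}^l \bm{x}\rVert_2 \leq \lVert \bm{w}^l\rVert_2 \lVert \bm{x}\rVert_2$ gives $\lVert \bm{\hat{z}}^l - \bm{z}^l\rVert_2 \leq \lVert \bm{w}^l\rVert_2 \lVert \bm{r}^{l-1} - \bm{a}^{l-1}\rVert_2$. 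The residual $\lVert \bm{r}^{l-1} - \bm{a}^{l-1}\rVert_2$ is exactly $e^{l-1}$ by the definition of the layer-wise error (with $\bm{r}^{l-1} = \text{S}(\bm{\hat{z}}^{l-1};\theta^{l-1})$ and $\bm{a}^{l-1} = \text{A}(\bm{z}^{l-1})$), completing the recursive bound.

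For the global bound, I would unroll this recursion by induction on $L$. Writing $\epsilon^l := \lVert \text{S}(\bm{\hat{z}}^l;\theta^l) - \text{A}(\bm{\hat{z}}^l)\rVert_2$, one iteration gives $e^L \leq \epsilon^L + \lVert \bm{w}^L\rVert_2 e^{L-1}$; substituting the bound for $e^{L-1}$ and continuing down to $e^0$ yields $e^L \leq \sum_{l=1}^L \bigl(\prod_{k=l+1}^L \lVert \bm{w}^k\rVert_2\bigr) \epsilon^l + \bigl(\prod_{k=1}^L \lVert \bm{w}^k\rVert_2\bigr) e^0$. Using the hypothesis $e^0 = 0$ (the two networks receive the same input) kills the residual, leaving the stated bound with the convention that the empty product $\prod_{k=L+1}^L \lVert \bm{w}^k\rVert_2 = 1$.

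I do not expect any step to pose a real obstacle; the only subtlety is being careful that the Lipschitz/operator-norm chain is set up so that $\lVert \bm{w}^l\rVert_2$ refers to the spectral (operator) norm, matching the product that appears in the final bound. For convolutional layers one would additionally need to view $\bm{w}^l$ as its induced linear operator, but this is a standard reinterpretation rather than a genuine difficulty.
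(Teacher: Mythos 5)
Your proposal is correct and follows essentially the same route as the paper: insert the hybrid term $\text{A}(\bm{\hat{z}}^l)$, apply the triangle inequality, bound the second term via the $1$-Lipschitz property of ReLU (which the paper verifies by an explicit four-case check) together with the spectral-norm inequality, and unroll the recursion with $e^0=0$ and the empty-product convention. No gaps.
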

Here $\lVert \bm{w}^k \rVert_2$ is the matrix norm (p=2) or spectral norm of the weight matrix.
The detailed proof is provided in the Appendix. Theorem 1 indicates that the conversion error is bounded by the weighted sum of errors across all layers. Based on this result, we introduce the local threshold balancing algorithm in the following section.

\subsection{Local Threshold Balancing Algorithm}

The target of ANN-SNN conversion is to minimize the conversion error and achieve high-performance SNNs.  An alternative approach is to optimize the error bound, defined as:
\begin{equation}
\begin{split}
     \arg & \min_{\bm{\theta}} \mathbb{E}_{x^0 \in \mathcal{D}} \\
     & \left[ \sum_{l=1}^{L} \left (\prod_{k=l+1}^{L} \lVert \bm{w}^k \rVert_2\right ) \left \lVert \text{S}(\bm{\hat{z}}^l; \theta^l) - \text{A}(\bm{\hat{z}}^l) \right \rVert_2 \right ].
\end{split}
\end{equation}
Here $\bm{\theta}$ represents the set of threshold values to be optimized across all layers, i.e., $\bm{\theta}=\{\theta^1,\theta^2,...,\theta^L\}$. $x^0$ denotes the input data sample drawn from the dataset $\mathcal{D}$. This approach leverages data-driven conversion, minimizing the expectation of the conversion error bound over the data distribution.  To reduce computational costs during optimization, we introduce two key simplifications. Firstly, we use a greedy strategy to optimize the threshold $\bm{\theta}$ layer by layer,
that is 
\begin{equation}
\begin{split}
    \text{For eac}&\text{h layer}~l,~\arg \min_{\theta^l} \mathbb{E}_{x^0 \in \mathcal{D}}\\
    & \left[ \left (\prod_{k=l+1}^{L} \lVert \bm{w}^k \rVert_2\right ) \left \lVert \text{S}(\bm{\hat{z}}^l; \theta^l) - \text{A}(\bm{\hat{z}}^l) \right \rVert_2 \right ].
\end{split}
\end{equation}
Secondly, we simplify the function $\text{S}(\cdot; \theta)$ by neglecting its temporal dynamics and approximating it with the clipping function $\text{C}(\cdot; \theta)$, as previous works have demonstrated that spiking neuron and clipping functions behave equivalently given sufficient number of time-steps~\citep{deng2020optimal}. The average input in the SNN can therefore be estimated as $\bm{\hat{z}}^0 = \bm{z}^0; \bm{\hat{z}}^{l+1} = \bm{w}^{l+1} \text{C}(\bm{\hat{z}}^l; \theta^l)$. Furthermore, We utilize the squared norm in the objective function for smoother optimization. The optimization problem then becomes:
\begin{equation}
\begin{split}
    \text{For eac}&\text{h layer}~l,~ \arg \min_{\theta^l} \mathbb{E}_{x^0 \in\mathcal{D}} \\
    & \left[ \left (\prod_{k=l+1}^{L} \lVert \bm{w}^k \rVert_2\right ) \lVert \text{C}(\bm{\hat{z}}^l;\theta^l) - \text{A}(\bm{\hat{z}}^l)\rVert_2^2\right],
\end{split}
\label{eq:aim}
\end{equation}
\begin{equation}
    \text{where}~\text{C}(x; \theta) = \min (\max(0, x), \theta).
\end{equation}
Intuitively, we can employ the stochastic gradient descent algorithm to optimize the threshold. Since $\prod_{k=l+1}^{L} \lVert \bm{w}^k \rVert_2$ can be considered as a constant when weights are fixed for each layer, this term can be absorbed into the learning rate. The final update rule for the local threshold balancing algorithm at each step is:
\begin{align}
    \Delta \theta^l &=  - \sum_{i=0}^{N-1} 2(\hat z^l_i-\theta^l)H(\hat z^l_i-\theta^l), \label{eq:grad} \\
    \theta^l &\leftarrow \theta^l - \eta \Delta \theta^l.
\end{align}
Detailed derivation is provided in the Appendix. Here $\Delta \theta^l$ denotes the step size during optimization and $\eta$ is the learning rate or update step size. $N$ is the number of elements in the input vector, and $\hat{z}_i$ denotes each element in the vector $\bm{\hat z}^l$. $H(\cdot)$ is the Heaviside step function.

In practice, as shown in Figure~\ref{fig:lws}, we jointly optimize the threshold for each layer by sampling a small set of image samples from the training dataset. Before starting the conversion process, we first replace all ReLU activation functions $\text{A}(\cdot)$ with clipping functions $\text{C}(\cdot)$ and initialize the threshold to zero. We then randomly sample data batches from the training dataset and perform model inference with these data samples. The thresholds are locally updated during the inference until convergence. Notably, the parameter $\theta^l$ is updated locally, ensuring that the computational cost of each iteration is similar to the inference of the ANN model and the computational cost of the whole conversion process is at inference level.

\begin{figure}[t]
\centering
\includegraphics[width=0.45\textwidth]{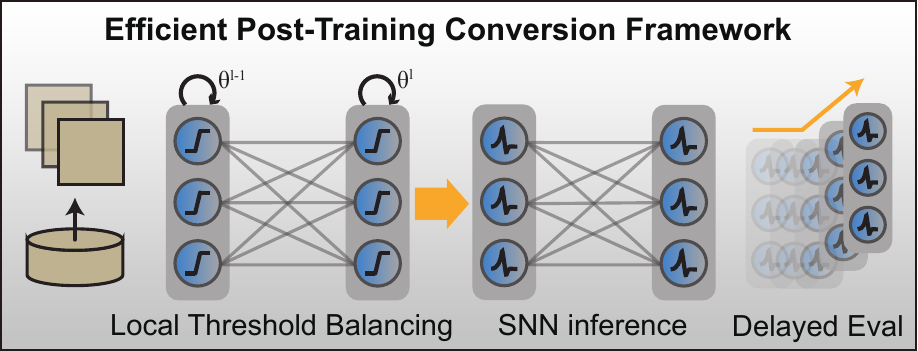}
\caption{Illustration of the proposed conversion framework. It requires only a small subset of data for conversion using local threshold balancing. During inference, the delayed evaluation technique is employed to enhance the accuracy of output estimation.}
\label{fig:lws}
\end{figure}
\vspace{-5pt}

\subsection{Pre-Neuron Max Pooling Layer}
The conversion of the max pooling layer presents a significant challenge. As a pivotal feature in the process of down-sampling input representations, the max pooling layer is a fundamental component in convolutional neural networks and is widely used in most convolutional architectures. However, due to the binary nature of spiking neurons, max pooling layers cannot effectively perform downsampling in feature maps. 
Typically, in a single max-pooling operation, multiple elements may share the same value, preventing the extraction of the most salient features. Consequently, max pooling is often avoided as a downsampling layer in SNNs. In most previous ANN-SNN methods, the max pooling layer in ANN is often replaced by an average pooling layer, and the model is re-trained or fine-tuned before conversion.

To convert architectures that contain max pooling layers, we propose a simple yet effective method. We replace all max-pooling layers with neuron layers, allowing the pre-neuron floating-point input to pass through the downsampling layer before being input into the neurons. We have the following theorem:
\begin{theorem}
\label{t2}
The order of the max pooling layer and ReLU layer does not affect the ANN output results.
\begin{align}  
    \max_i \textnormal{R}(\bm{z}) =  \textnormal{R}(\max_i\bm{z}), ~\rm{when}~\max(\bm{z})>0.
\end{align}
\end{theorem}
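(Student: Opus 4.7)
The plan is to prove this by combining two elementary facts: (i) the ReLU function $\mathrm{R}(x)=\max(0,x)$ is monotonically non-decreasing, and (ii) under the hypothesis $\max_i z_i > 0$, the maximizing coordinate survives the ReLU intact. Together these make both sides collapse to the same scalar.

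Concretely, I would first let $j = \arg\max_i z_i$, so that $z_j = \max_i z_i > 0$ by hypothesis. Then $\mathrm{R}(z_j) = z_j$, which immediately gives $\mathrm{R}(\max_i z_i) = z_j$. For the other side, I would argue that for every index $i$, monotonicity of $\mathrm{R}$ yields $\mathrm{R}(z_i) \leqslant \mathrm{R}(z_j) = z_j$, while $\mathrm{R}(z_j)$ itself is attained in the family $\{\mathrm{R}(z_i)\}_i$. Therefore $\max_i \mathrm{R}(z_i) = z_j$ as well, and both sides agree.

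As a brief sanity check, it is worth remarking why the hypothesis $\max(\bm z) > 0$ is needed at all. If $\max_i z_i \leqslant 0$, the identity still holds because both sides equal $0$, but the non-strict case is actually the only one relevant in practice: if the pre-ReLU maximum is non-positive, the feature map location is inactive anyway, and the pre-neuron rearrangement is vacuous. I would mention this in a remark so that the reader sees the statement is tight enough for the intended use.

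The main potential obstacle is not the mathematics but being explicit that the theorem generalizes correctly to the actual network setting, where $\bm z$ is a channel of a feature map and $\max_i$ is taken over a pooling window. Since the identity is pointwise in the channel dimension and the pooling window is just a fixed index set, the scalar argument above applies unchanged to each window; I would state this once at the end rather than reprove it with heavier notation. No further analytical tools are required.
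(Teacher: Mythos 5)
Your proof is correct and follows essentially the same elementary argument as the paper, which likewise reduces both sides to $\max(\bm z)$ by writing $\mathrm{R}(x)=\max(x,0)$ and using $\max(\bm z)>0$; your argmax-plus-monotonicity phrasing is just a cosmetic variant. Your side remark that the identity in fact also holds (with both sides equal to $0$) when $\max(\bm z)\leqslant 0$ is accurate and harmless.
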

Here $\textnormal{R}(\cdot)$ denotes the ReLU activation. The detailed proof is provided in the Appendix. Theorem 2 guarantees that such an operation does not affect the final performance. When deployed on neuromorphic chips or FPGAs, the max pooling operation can be efficiently implemented through a comparator, avoiding additional floating-point multiplications and minimizing power consumption.

\subsection{Channel-wise Threshold Balancing}
During conversion, following~\citet{kim2020spiking, li2021free}, we employ a channel-wise local threshold balancing method applicable to both fully connected and convolutional layers.

In this approach, neurons in each channel share a single threshold for convolutional layers while each neuron has an independent threshold for fully connected layers. Specifically, for convolutional layers, we channel-wisely determine the threshold value, while for fully connected layers, we optimize the threshold element-wise. The update function is given by:
\begin{align}
    \Delta \theta_c^l = - \sum_{i=0}^{N-1} 2(\hat z^l_{c,i}-\theta^l_c)H(\hat z^l_{c,i}-\theta^l_c),
\end{align}
where $c$ denotes the channel index (or element index for fully connected layers) and $i$ denotes the element index within the current channel.

After optimization, the thresholds for each channel can be individually absorbed into the corresponding channel of the convolutional kernel or the weight vectors of the fully connected layers. This channel-wise (or element-wise) operation ensures that the behavior of the IF neuron remains unaffected. This algorithm aims to more precisely determine the optimal threshold value while preserving the fundamental properties of spiking neurons.

\subsection{Delayed Evaluation Strategy}
As introduced in \citep{bu2022optimized}, the first spike from the last layer often takes longer than expected to appear due to propagation delays between layers.
This delay creates significant fluctuations in the firing rate during the initial time-steps, as no spikes are generated until the first one reaches each layer \citep{hwang2021low}. To address this issue, we propose an effective delayed step selection strategy building upon the delayed evaluation technique~\citep{hwang2021low}. 

As shown in Figure~\ref{fig:lws}, the basic idea of the delayed evaluation is to compute the average spike count in the last layer from time-step $t_0$ to $T$, where $t_0$ denotes the delayed step, rather than averaging over all time-steps. Since the output is collected from the average spike count in the interval $[t_0, T]$ at the last layer, we can rewrite Equation~\ref{eq:pot} for the last layer as
\begin{align}
    \bm{r}^L(t_0,T)=\bm{w}^L \bm{r}^{L-1}(t_0,T)-\frac{\bm{v}^L(T)-\bm{v}^L(t_0)}{T-t_0}, \label{eq:pot2}
\end{align}
where $\bm{r}^l(t_0,t)=\sum_{i=t_0}^t \bm{s}^l(i) \theta^l/(t-t_0)$ denotes the average postsynaptic potential from time $t_0$ to $t$, and $L$ represents the last layer index. When $T \gg t_0$, one can still guarantee that the basic equivalence of the average postsynaptic potential and ANN activations holds.

In our strategy, we do not start counting the average output spike until $\widetilde{t_0}$, where $\widetilde{t_0}$ is the expected spike timing of the first spike in the last layer. We estimate $\widetilde{t_0}$ based on the converted SNN model, denoted as
\begin{align}
    \widetilde{t_0} = \sum_{l=1}^{L} \frac{\theta^l- v^l(0)}{\max_i \left (\overline{\text{R}(\hat{\bm z}^l)}\right )}
    \approx \sum_{l=1}^{L} \frac{\theta^l - v^l(0)}{\max_i \left ( \overline{\text{R}(\bm{z}^l)} \right ) } . \label{eq:delay}
\end{align}

Here $\max_i(\cdot)$ selects the maximum element of a given vector. The notation $\overline{(\cdot)} = \mathbb{E}_{x^0 \in\mathcal{D}}[(\cdot)]$ represents the expectation of the given input over dataset. Additionally, we use the ReLU function $\text{R}(\cdot)$ to prevent negative estimated time. Before evaluation, we first calculate $\widetilde{t_0}$ according to Equation~\ref{eq:delay} and consider it as an indicator for the delayed evaluation strategy. 
In practice, if the inference step $T$ is less than $\widetilde{t_0}+4$, we set $t_0=T-4$ and collect the average output over the time interval $[T-4, T]$. If $T$ exceeds $\widetilde{t_0}+4$, we use the interval $[\widetilde{t_0}, T]$ for output estimation. The value $4$ is empirically chosen to ensures a sufficient delay while allowing adequate time for averaging spikes over time.

This technique introduces minimal additional training cost and does not alter the model's inference process, making it seamlessly integrated into the proposed conversion framework without disrupting the overall pipeline.

\section{Experiments}
We conduct extensive experiments to demonstrate the effectiveness of our method and highlight its potential practical value. We first showcase the advantages of our approach by integrating it with existing conversion frameworks, comparing it with state-of-the-art post-training conversion algorithms, and conducting ablation experiments to validate the effectiveness of the whole conversion pipeline. Subsequently, we apply our algorithm to various visual tasks, including image classification, semantic segmentation, object detection, and video classification. 
The feasibility of our algorithm is highlighted on both classification and regression tasks, showcasing its generalizability across different datasets and its superiority over existing conversion algorithms.

To underscore that our conversion method does not require additional training, we utilize open-source pre-trained ANN models for conversion, with most pre-trained sourced from TorchVision~\citep{torchvision2016}. Additionally, we incorporate the membrane potential initialization algorithm~\citep{bu2021optimal}, setting the membrane potential to half of the threshold before inference. The pseudocode for our conversion pipeline and detailed training settings are provided in the Appendix.

\subsection{Plug-and-Play Integration for Enhanced Performance}
\begin{table}[t]
\centering
\renewcommand\arraystretch{0.2}
\setlength\tabcolsep{5pt}
\footnotesize
{
\begin{threeparttable}
\begin{tabular}{ccccccccccc} \toprule
\multirow{3}{*}{Method} & \multirow{3}{*}{ANN} & \multirow{3}{*}{CF} & \multicolumn{5}{c}{Inference Step} \\ \cmidrule{4-8}
& & & 32 & 64 & 128 & 256 & 512 \\ \midrule
\multirow{2}{*}{LCP}  & \multirow{2}{*}{75.66} & $\times$ & 50.21 & 63.66 & 68.89 & 72.12 & 74.17 \\ \cmidrule{3-8}
& & \checkmark & 53.80 & 67.18 & 72.18 & 74.07 & 74.56 \\\midrule
\multirow{2}{*}{ACP}  & \multirow{2}{*}{75.66} & $\times$ & 64.54 & 71.12 & 73.45 & 74.61 & 75.35 \\ \cmidrule{3-8}
& & \checkmark & 65.95 & 72.87 & 74.21 & 74.73 & 75.26\\ \midrule
\multirow{2}{*}{QCFS} & \multirow{2}{*} {73.30} & $\times$ & 64.43 & 71.51 & 73.25 & 73.67 & 73.76  \\ \cmidrule{3-8}
& & \checkmark & 71.94 & 73.06 & 73.50 & 73.49 & 73.54\\ \midrule
\end{tabular}
\end{threeparttable}
}
\caption{Integration of the proposed conversion approach with existing framework}
\label{tab:combine}
\vspace{-5pt}
\end{table}

In this section, we aim to highlight that our proposed conversion method provides a lightweight, plug-and-play solution that can be seamlessly integrated with other conversion algorithms to enhance performance. We select two representative ANN-SNN conversion approaches from existing work: the post-training fine-tuning (calibration) conversion method LCP/ACP~\citep{li2021free, li2024error}, and the training-required conversion method QCFS~\citep{bu2021optimal}. We reproduce the conversion pipeline of both methods, incorporate a local threshold balancing method for threshold searching, and use our delayed evaluation strategy. The evaluation is conducted on the ImageNet dataset with the ResNet-34 architecture.

The results, as detailed in Table~\ref{tab:combine}, showcase the performance of various configurations. The column labeled "CF" indicates whether the proposed method is used. Notably, the composite approaches, which combine our method with either LCP/ACP or QCFS, consistently outperform the individual conversion method across most timesteps. After incorporating our methods, both QCFS and LCP exhibit performance improvements exceeding 3\% when T=32, while maintaining consistent performance as $T$ increases. This demonstrates the advantages of using our approach in conjunction with existing conversion techniques.

\subsection{Ablation Study for the Conversion Framework}
\begin{figure}[t] 
    \centering
    \begin{subfigure}[b]{0.22\textwidth}
        \includegraphics[width=\textwidth]{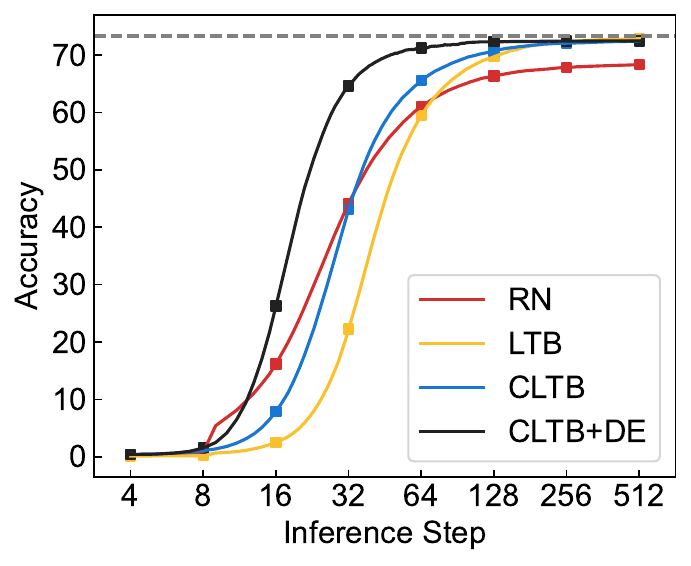}
        \caption{Ablation Study}
        \label{fig:ablation}
    \end{subfigure}
    \hspace{10pt}
    \begin{subfigure}[b]{0.22\textwidth}
        \includegraphics[width=\textwidth]{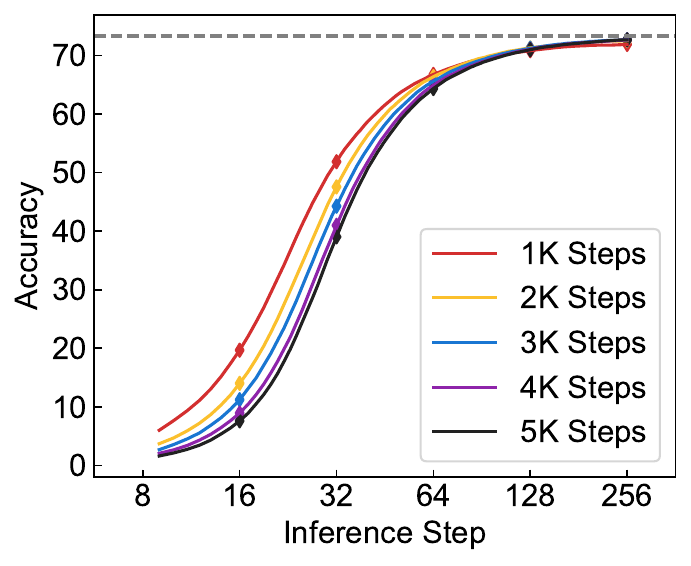}
        \caption{Effect of Iteration Steps}
        \label{fig:steps}
    \end{subfigure}
    \vspace{-5pt}
    \caption{(a) Results on evaluating the impact of different techniques within the conversion framework. (b) Effect of iteration steps on SNN performance after conversion.}
\vspace{-5pt}
\end{figure}

We demonstrate the effectiveness of our proposed conversion framework by using the ResNet-34 architecture on the ImageNet dataset. An ablation study is conducted to evaluate the impact of different techniques within the framework. Specifically, as shown in Figure~\ref{fig:ablation}, we compare the performance of four different SNNs obtained by combinations of different techniques. The baseline SNN is obtained using the Robust-Norm~\citep{rueckauer2017conversion} algorithm (RN, red curve), which is a commonly used post-training ANN-SNN conversion method that sets the 99-th percentile activation value as the threshold. The other SNNs are obtained by using the layer-wise local threshold balancing algorithm (LTB, yellow curve), the channel-wise threshold balancing algorithm (CLTB, blue curve), and a combination of channel-wise threshold balancing and delayed evaluation (CLTB+DE, black curve).

Figure~\ref{fig:ablation} presents the accuracy changes of the converted SNNs with respect to the increasing time-steps. The performance of the SNN obtained from the full conversion pipeline (black curve) consistently outperforms the other SNNs across all time-steps, demonstrating an excellent balance between fast inference and high performance of the proposed conversion framework. This is particularly evident when combined with the proposed delayed evaluation technique. Compared to the robust norm method (red curve), the peak performance of the SNNs obtained by the proposed local threshold balancing method (black, blue and yellow curves) more closely aligns with the original ANN accuracy. Furthermore, the use of the channel-wise threshold balancing (CLTB, blue curve) results in a noticeable performance improvement over the vanilla threshold balancing (LTB, yellow curve) algorithm.

\subsection{Effect of the Iteration Step}
We evaluate the influence of the iteration steps of the local threshold balancing algorithm. The iteration number of the local threshold balancing serves as the hyper-parameter of the method. 
We focus on the most straightforward approach by empirically evaluating the impact of different iteration steps. We converted five different SNNs from the pre-trained ImageNet ResNet-34 model, varying the iterations steps from 1000 to 5000. 
The batch size during local threshold balancing is consistently set to 100. Figure~\ref{fig:steps} shows the final accuracy of the different converted SNNs. As number of iterations increases, the peak performance of the converted SNNs at different inference steps approaches the accuracy of the original ANN. Furthermore, these SNNs demonstrate better performance at low time-steps when fewer iteration steps are used. On the ImageNet dataset, with only 1000 iterations of local threshold update, the performance gap between the SNN at 512 inference time-steps and the ANN is approximately 1\%. With 5000 iterations, this difference is further reduced to around 0.2\%. Notably, the accuracy surpasses 65\% within just 55 time-steps, achieving a balance between inference time and performance.

\subsection{Effect of the Delayed Step}
\begin{figure}[t] 
    \centering
    \includegraphics[width=.46\textwidth]{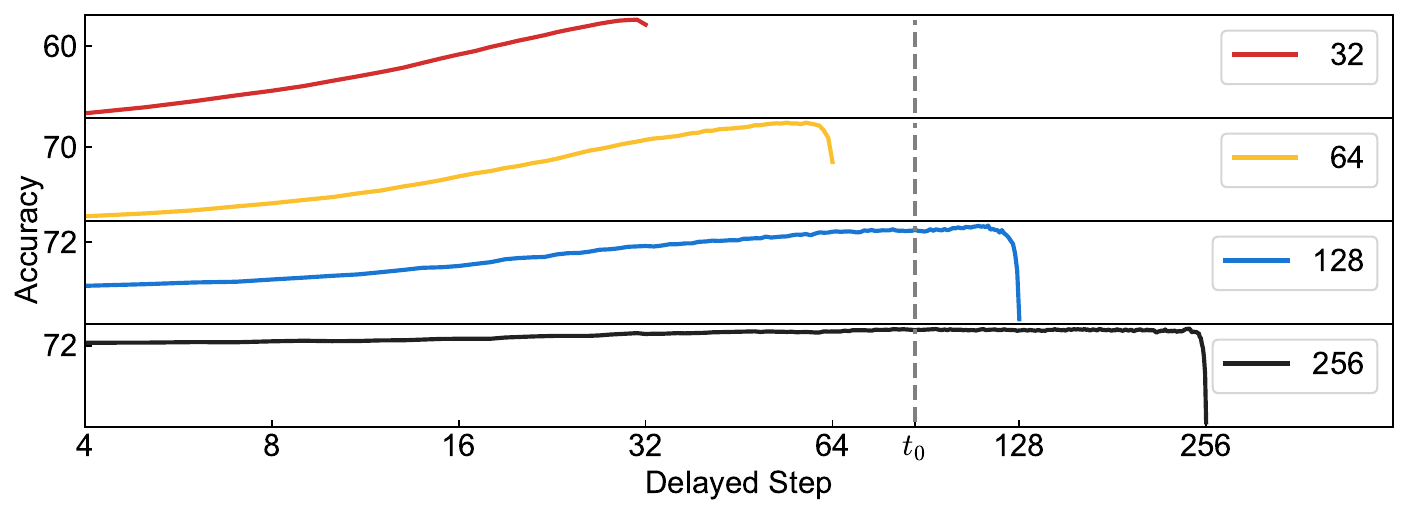}
\vspace{-5pt}
\caption{Effect of delayed steps}
\label{fig:delay}
\end{figure}
\vspace{-5pt}

We conduct experiments to demonstrate the effectiveness of the proposed delayed strategy on the ImageNet dataset using the ResNet-34 architecture. As shown in Figure \ref{fig:delay}, we evaluate the impact of the delayed step across four different inference time-step settings: 32 steps (red curve), 64 steps (yellow curve), 128 steps (blue curve), and 256 steps (black curve). The grey vertical line indicates the estimated $\widetilde{t_0}$ for the currently converted SNN. For each sub-figure, the x-axis represents the number of delayed steps, while the y-axis denotes the accuracy. 

Although the delayed strategy is pre-determined before inference, the estimated $\widetilde{t_0}$ can still serves as an indicator that provides a near-optimal strategy to select the delayed steps. When the total inference step is less than the estimated $\widetilde{t_0}$ (red and yellow curves), the final performance gradually improves with the increase in the delayed steps, but drops significantly when the delayed step approaches the inference step. In the situation that the total inference step exceeds $\widetilde{t_0}$ (blue and black curve), the performance first increases then fluctuates when delayed step is larger than $\widetilde{t_0}$, and ultimately experiences a sharp decline as the delayed steps get close to the inference step. The above results indicate that we can always reach near-optimal performance with the proposed delayed evaluation strategy.

\subsection{Test on Image Classification task}
\begin{table}[t]
\centering
\renewcommand\arraystretch{0.2}
\setlength\tabcolsep{3pt}
\footnotesize
{
\begin{threeparttable}
\begin{tabular}{cccccccc} \toprule
\multirow{2}{*}{Architecture} & \multirow{2}{*}{Method} & \multirow{2}{*}{ANN} & \multicolumn{5}{c}{Inference Step} \\ \cmidrule{4-8}
&  &  & 32 & 64 & 128 & 256 & $\geqslant$512\\ \midrule
\multicolumn{8}{c}{\textbf{Post-Training}} \\ \midrule
ResNet-34\tnote{1} & RMP & 70.64  & - & - & - & 55.65 & 60.08      \\ \midrule
ResNet-34\tnote{2} & LCP & 75.66  & 55.16 & 67.56 & 72.48 & 74.53 & 75.44    \\ \midrule
MobileNet & LCP & 73.40  & 0.20 & 15.47 & 55.95 & 66.35 & 72.19 \\ \midrule
MobileNetV2 & Ours  & 71.88  & 19.39 & 49.93 & 64.31 & 68.89 & 70.48      \\ \midrule
ResNet-18\tnote{3} & Ours   & 69.76  & 65.55 & 68.53 & 69.03 & 69.21 & 69.21\\ \midrule
ResNet-34\tnote{3} & Ours   & 73.31  & 64.88 & 71.17 & 72.30 & 72.42 & 72.46      \\ \midrule
\multicolumn{8}{c}{\textbf{Training-involved}} \\ \midrule
ResNet-34\tnote{3} & QCFS   & 73.30 & 64.43 & 71.51 & 73.25 & 73.67 & 73.76      \\ \midrule
ResNet-34\tnote{2} & ACP    & 75.66 & 64.65 & 71.30 & 73.94 & 75.00 & 75.45      \\ \midrule
ResNet-34\tnote{3} & Ours*  & 73.30 & 71.94 & 73.06 & 73.50 & 73.49 & 73.54      \\ \bottomrule
\end{tabular}
\begin{tablenotes}
    \footnotesize
    \item[1] Standard ResNet-34 architecture without batch normalization.
    \item[2] Two more layers in the first conv-block compared to standard ResNet-34.
    \item[3] Standard ResNet-18/34 architecture.
\end{tablenotes}
\end{threeparttable}
\caption{Comparison of the proposed method and previous works on the ImageNet dataset.}
\label{tab:imagenet}
}
\vspace{-5pt}
\end{table}

We evaluate our conversion method on the classification task with the ImageNet dataset, employing different architectures including ResNet and MobileNet. To highlight the advantages of the proposed conversion algorithm, we conduct comparisons with previous post-training algorithms, showcasing its superior performance. When using the ResNet-34 architecture, the SNN converted by our algorithm exceeds 71\% accuracy within 64 time steps. In contrast, the SNNs converted by the previous algorithm~\citep{han2020deep} and the calibration-required conversion algorithm~\citep{li2024error} require longer time-steps to achieve comparable accuracy.
We also implement the proposed conversion framework on QCFS pre-trained models (denote as "Our*" in Table~\ref{tab:imagenet}), and compare with the sota training-involved ANN-SNN techniques. The accuracy of the proposed method reaches 71.94\% within 32 steps, demonstrating significant advantages compared to the fine-tuning required advanced calibration technique ACP~\citep{li2024error} and training-involved vanilla QCFS~\citep{bu2021optimal}.
The experiments on the ImageNet dataset demonstrate the superior performance of the proposed conversion algorithm and highlight its potential for large-scale applications.

In addition, we demonstrate the energy-saving advantage of our method by theoretically estimating the energy efficiency of our converted SNN. We use the same energy estimation approach as \citet{bu2021optimal}. With ResNet-34 architecture on ImageNet, SNN converted from our algorithm can achieve energy efficiency at 622FPS/W while maintaining 90\% performance. In comparison, the energy efficiency of the original ANN is only 22FPS/W, which is 28 times lower than that of the converted SNN. For detailed energy consumption analysis, please refer to the Appendix.

\subsection{Test on Semantic Segmentation, Object Detection, and Video Classification Tasks}
We further extend the proposed ANN-SNN conversion method to semantic segmentation, object detection, and video classification tasks. For the semantic segmentation task, we evaluate our method on two commonly used datasets: Pascal VOC 2012~\citep{Everingham15} augmented by SBD~\citep{6126343} and MS COCO 2017~\citep{lin2014microsoft}. We employ various semantic segmentation models, including FCN~\citep{long2015fully} and DeepLabV3~\citep{chen2017rethinking}, provided by TorchVision~\citep{torchvision2016} and other open-source repositories. As shown in Table~\ref{tab:seg}, our method is compatible with tackling semantic segmentation tasks without any preparatory training.  Specifically, we achieved a 53.50\% mIoU with the FCN model using a ResNet-34 backbone in 64 time-steps on the Pascal VOC dataset and a 61.52\% mIoU with the DeepLabV3 model using a ResNet-50 architecture in 128 timesteps on the more complex MS COCO 2017 dataset. Given that pixel-level classification tasks require precise model output, this excellent performance highlights the effectiveness and generalizability of the proposed method.

For object detection tasks, we use the benchmark dataset MS COCO 2017~\citep{lin2014microsoft} with open-source models from TorchVision~\citep{torchvision2016}.  We employ the fully convolutional object detection method RetinaNet~\citep{lin2017focal} and FCOS~\citep{tian2019fcos}, and convert the pre-trained ANN models to SNNs using the proposed method. Table~\ref{tab:seg} shows detailed performance under different inference time-steps. Within 64 time-steps, the converted SNN can achieve 27.3\% mAP with RetinaNet and 26.5\% mAP with FCOS, which represents a significant improvement compared to previous explorations~\citep{kim2020spiking, miquel2021retinanet}. The previous method required over 1000 time-steps to achieve comparable performance as ANN, which is detrimental to real-time detection and energy efficiency.

For the video classification tasks, we use a small Kinetics-400 dataset~\citep{kay2017kinetics}, which is a subset of the original dataset. More specifically, we split the original validation set into two independent sets, one for our threshold balancing procedure, accounting for about 60\% of the original validation set, and the other for the test. As a result, the new train set contains 12000 videos and the new test set contains 7881 videos. 
The converted MC3-18~\citep{DBLP:journals/corr/abs-1711-11248} SNN model achieves 61.60\% accuracy within 64 time-steps, showing great potential of our method in more complex computer vision tasks.

\begin{table}[t]
\centering
\renewcommand\arraystretch{0.2}
\setlength\tabcolsep{3pt}
\footnotesize
{
\begin{threeparttable}
\begin{tabular}{cccccccccc} \toprule
\multirow{3}{*}{Dataset} & \multirow{3}{*}{Arch.}  & \multirow{3}{*}{ANN} & \multicolumn{4}{c}{Inference Step} \\ \cmidrule{4-7}
& &   & 32 & 64 & 128 & 256 \\ \midrule
\multicolumn{7}{c}{\textbf{Semantic Segmentation} (mIoU\%)} \\ \midrule
\multirow{17}{*}{PascalVOC}
&FCN-18       & 47.30    & 43.49     & 47.02     & 47.26    & 47.23    \\ \cmidrule{2-7} 
&FCN-34       & 54.99    & 46.60     & 53.50     & 54.75    & 54.76    \\ \cmidrule{2-7} 
&DeepLab-18  & 55.75     & 48.59     & 53.91     & 55.40    & 55.60     \\ \cmidrule{2-7} 
&DeepLab-34  & 58.95     & 41.20     & 55.74     & 58.36    & 58.68     \\ \midrule 
\multirow{3}{*}{MS COCO}
&FCN-50  & 60.67    & 28.66     & 50.38     & 57.01     & 59.53      \\ \cmidrule{2-7} 
&DeepLab-50  & 63.01    & 39.02     & 56.62     & 61.52     & 62.61      \\ \midrule 
\multicolumn{7}{c}{\textbf{Object detection} (mAP\%)} \\ \midrule
\multirow{3}{*}{MS COCO}
&RetinaNet-50     & 36.4  & 16.6 & 27.3 & 32.2 & 34.3   \\ \cmidrule{2-7} 
&FCOS-50          & 39.2  & 12.2 & 26.5 & 32.1 & 33.6  \\ \midrule 
\multicolumn{7}{c}{\textbf{Video Classification} (Acc\%)} \\ \midrule
\multirow{2}{*}{S-Kinetics-400}
&MC3-18          & 63.52  & 55.21 & 61.60 & 62.43 & 62.73  \\ \cmidrule{2-7} 
&R3D-18          & 62.56  & 53.46 & 60.31 & 61.12 & 61.15  \\ \midrule
\end{tabular}
\end{threeparttable}
}
\caption{Performance on semantic segmentation (mIoU\%), object detection (mAP\%), and video classification (Acc\%) tasks.}
\label{tab:seg}
\vspace{-5pt}
\end{table}

\section{Conclusion and Limitation}
This paper introduces the concept of a lightweight post-training ANN-SNN conversion algorithm with inference-scale computational cost, which can directly convert pre-trained ANN models into SNNs without GPU-based training. The significant reduction in training overhead highlights the potential of the rapid deployment of large-scale SNNs in various scenarios and makes the instant on-chip conversion from pre-trained ANN to SNN possible.
However, there is a trade-off between performance and training cost, with the overall performance of the proposed post-training framework being slightly weaker compared to other training-involved SNN learning methods. Additionally, while this work focuses on the conversion of convolutional SNNs, future efforts will aim to extend such a conversion approach to transformer architectures and other complex layers.

\section{Acknowledgement}
\noindent
This work is funded by National Natural Science Foundation of China (62422601,62176003,62088102), Beijing Municipal Science and Technology Program (Z241100004224004), Beijing Nova Program (20230484362).

\clearpage
{
    \small
    \bibliographystyle{ieeenat_fullname}
    \bibliography{main}
}

\appendix

\renewcommand{\thefigure}{S\arabic{figure}}  
\renewcommand{\theequation}{S\arabic{equation}} 
\renewcommand{\thetable}{S\arabic{table}}

\setcounter{section}{0}
\setcounter{theorem}{0}
\setcounter{equation}{0}
\setcounter{figure}{0}

\maketitlesupplementary

\section{Proof for Error Bound}
\begin{theorem*}
\label{t1}
The layer-wise conversion error can be divided into intra-layer and inter-layer errors:
\begin{align}
    e^l &\leqslant \overbrace{\lVert \textnormal{S}(\bm{\hat{z}}^l; \theta^l) - \textnormal{A}(\bm{\hat{z}}^l)\rVert_2}^\textnormal{intra-layer~error} + \overbrace{\lVert \bm{w}^l \rVert_2 e^{l-1}}^\textnormal{inter-layer error}.
\end{align}
Given that both ANN and SNN models receive the same input in the first layer, leading to $e^0 = 0$, the upper bound for the conversion error between ANN and SNN models in an $L$-layer fully-connected network is given by
\begin{align}
    e_\textnormal{model}=e^L &\leqslant \sum_{l=1}^{L} \left (\prod_{k=l+1}^{L} \lVert \bm{w}^k \rVert_2\right ) \left \lVert \textnormal{S}(\bm{\hat{z}}^l; \theta^l) - \textnormal{A}(\bm{\hat{z}}^l) \right \rVert_2
\end{align}
\end{theorem*}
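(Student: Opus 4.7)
The plan is to establish the per-layer recursive inequality first, and then unroll it to obtain the model-level bound. The recursive step hinges on the standard add-and-subtract trick inside the $\ell_2$-norm, together with the fact that ReLU is 1-Lipschitz.

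First I would prove the recursive form. Starting from the definition $e^l = \lVert \mathrm{S}(\bm{\hat z}^l; \theta^l) - \mathrm{A}(\bm z^l) \rVert_2$, I would insert the auxiliary vector $\mathrm{A}(\bm{\hat z}^l)$ inside the norm and apply the triangle inequality to split $e^l$ into $\lVert \mathrm{S}(\bm{\hat z}^l; \theta^l) - \mathrm{A}(\bm{\hat z}^l) \rVert_2 + \lVert \mathrm{A}(\bm{\hat z}^l) - \mathrm{A}(\bm z^l) \rVert_2$. The first summand is literally the intra-layer error stated in the theorem. For the second, I would invoke the coordinate-wise 1-Lipschitz property of ReLU, which yields $\lVert \mathrm{A}(\bm{\hat z}^l) - \mathrm{A}(\bm z^l) \rVert_2 \leqslant \lVert \bm{\hat z}^l - \bm z^l \rVert_2$. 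Since both networks share weights, $\bm{\hat z}^l - \bm z^l = \bm w^l(\bm r^{l-1} - \bm a^{l-1})$, and the defining inequality of the spectral norm gives $\lVert \bm w^l (\bm r^{l-1} - \bm a^{l-1}) \rVert_2 \leqslant \lVert \bm w^l \rVert_2 \lVert \bm r^{l-1} - \bm a^{l-1} \rVert_2 = \lVert \bm w^l \rVert_2 \, e^{l-1}$, producing the inter-layer term.

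Second, I would unroll the recursion $e^l \leqslant \alpha^l + \lVert \bm w^l \rVert_2 \, e^{l-1}$ (with $\alpha^l$ denoting the intra-layer term) by a one-line induction on $l$, using the base case $e^0 = 0$ guaranteed by the shared input. The induction produces $e^L \leqslant \sum_{l=1}^{L} \alpha^l \prod_{k=l+1}^{L} \lVert \bm w^k \rVert_2$ under the usual convention that the empty product (corresponding to $l = L$) equals one, which is exactly the stated closed form.

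The main obstacle is really bookkeeping rather than mathematics: I must be careful that the Lipschitz property of the component-wise ReLU carries over cleanly to the $\ell_2$-norm on vectors (which it does, since each coordinate is individually non-expansive), and that $\lVert \bm w^l \rVert_2$ is interpreted as the $\ell_2 \to \ell_2$ operator norm, so that the submultiplicativity step is a direct appeal to the definition rather than a nontrivial inequality. Extending the argument from fully-connected to convolutional layers would only require reinterpreting the spectral norm for the convolution operator, and is not needed for the theorem as stated.
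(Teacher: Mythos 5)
Your proposal is correct and follows essentially the same route as the paper's own proof: triangle inequality after inserting $\textnormal{A}(\bm{\hat{z}}^l)$, the non-expansiveness of ReLU (which the paper verifies by an explicit four-case sign analysis rather than citing the Lipschitz property), submultiplicativity of the spectral norm, and unrolling the recursion from $e^0=0$ with the empty-product convention. No gaps.
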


\begin{proof}[error bound]
According to the definition of the conversion error (Equation (7)), we have
\begin{align}
    e^l &= 
    \lVert \textnormal{S}(\bm{\hat{z}}^l) - \textnormal{A}(\bm{z}^l)\rVert_2  \nonumber\\
    &=
    \lVert \textnormal{S}(\bm{\hat{z}}^l) - \textnormal{A}(\bm{\hat{z}}^l) +
    \textnormal{A}(\bm{\hat{z}}^l) - \textnormal{A}(\bm{z}^l)\rVert_2  \\
    &\leqslant 
    \lVert \textnormal{S}(\bm{\hat{z}}^l) - \textnormal{A}(\bm{\hat{z}}^l)\rVert_2 +
    \lVert \textnormal{A}(\bm{\hat{z}}^l) - \textnormal{A}(\bm{z}^l)\rVert_2 \nonumber
\end{align}
Here the ANN activation function is  defined a s $\textnormal{A}(\cdot) = \textnormal{R}(\cdot) = \textnormal{ReLU}(\cdot)$, where $\textnormal{ReLU}(\cdot)$ is ReLU function. We first prove that $\lVert \textnormal{R}(\bm{\hat{z}}^l) - \textnormal{R}(\bm{z}^l)\rVert_2 \leqslant \lVert (\bm{\hat{z}}^l - \bm{z}^l)\rVert_2$. To do so, we analyze four possible cases for $z^l_i$ and $\hat z^l_i$, which are individual elements of $\bm{z}^l$ and $\bm{\hat z}^l$, respectively.
\begin{align}
    & \textnormal{if}~\hat z^l_i \geqslant 0,~z^l_i \geqslant 0, \\ \nonumber
    & \quad \quad \quad \textnormal{then}~(\textnormal{R}(\hat z^l_i) - \textnormal{R}(z^l_i))^2=(\hat z^l_i - z^l_i)^2 \\ \nonumber
    & \textnormal{if}~\hat z^l_i \geqslant 0,~z^l_i \leqslant 0,\\ \nonumber
    & \quad \quad \quad \textnormal{then}~(\textnormal{R}(\hat z^l_i) - \textnormal{R}(z^l_i))^2=(\hat z^l_i - 0)^2 \leqslant (\hat z^l_i - z^l_i)^2\\ \nonumber
    & \textnormal{if}~\hat z^l_i \leqslant 0,~z^l_i \geqslant 0, \\ \nonumber
    & \quad \quad \quad \textnormal{then}~(\textnormal{R}(\hat z^l_i) - \textnormal{R}(z^l_i))^2=(0 - z^l_i)^2 \leqslant (\hat z^l_i - z^l_i)^2 \\ \nonumber
    & \textnormal{if}~\hat z^l_i \leqslant 0,~z^l_i \leqslant 0,\\ \nonumber
    & \quad \quad \quad \textnormal{then}~(\textnormal{R}(\hat z^l_i) - \textnormal{R}(z^l_i))^2=(0 - 0)^2 \leqslant (\hat z^l_i - z^l_i)^2
\end{align}
Therefore, for each element in vector $\bm{z}^l$ and $\bm{\hat z}^l$, we can conclude that $\forall i, (\textnormal{A}(\hat z^l_i) - \textnormal{A}(z^l_i))^2 \leqslant (\hat z^l_i - z^l_i)^2$. From this, we can further derive 
\begin{align}
\lVert \textnormal{R}(\bm{\hat{z}}^l) - \textnormal{R}(\bm{z}^l)\rVert_2 \leqslant \lVert (\bm{\hat{z}}^l - \bm{z}^l)\rVert_2.
\end{align}
Back to the main theorem, we further rewrite the conversion error bound as
\begin{align}
    e^l &\leqslant 
    \lVert \textnormal{S}(\bm{\hat{z}}^l) - \textnormal{A}(\bm{\hat{z}}^l)\rVert_2 +
    \lVert (\bm{\hat{z}}^l - \bm{z}^l)\rVert_2 \nonumber\\
    &\leqslant 
    \lVert \textnormal{S}(\bm{\hat{z}}^l) - \textnormal{A}(\bm{\hat{z}}^l)\rVert_2 +
    \lVert \bm{w}^l (\textnormal{S}(\bm{\hat{z}}^{l-1}) - \textnormal{A}(\bm{z}^{l-1}))\rVert_2 \label{A:eq22}\\
    &\leqslant 
    \lVert \textnormal{S}(\bm{\hat{z}}^l) - \textnormal{A}(\bm{\hat{z}}^l)\rVert_2 +
    \lVert \bm{w}^l \rVert_2 \lVert \textnormal{S}(\bm{\hat{z}}^{l-1}) - \textnormal{A}(\bm{z}^{l-1}) \rVert_2 \label{A:eq23}\\
    &\leqslant 
    \overbrace{\lVert \textnormal{S}(\bm{\hat{z}}^l) - \textnormal{A}(\bm{\hat{z}}^l)\rVert_2}^\textnormal{intra-layer~error} + \overbrace{\lVert \bm{w}^l \rVert_2 e^{l-1}}^\textnormal{inter-layer error}. \label{A:eq231}
\end{align}
Note that $\lVert \bm{w}^l \rVert_2$ in Equation~\ref{A:eq23} represents the matrix norm (p=2) or spectral norm of the weight matrix $\bm{w}^l$, and the derivation from Equation~\ref{A:eq22} to \ref{A:eq23} holds true because of the property of the spectral norms. From the inequality above, we can find that the layer-wise conversion error is bounded by two components: the intra-layer error, which is layer-wise error when both the analog and spiking neurons receive the same input, and the inter-layer error, which is proportional to the layer-wise error in the previous layer.

We further derive the conversion error between models, which corresponds to the conversion error in the last output layer. For simplicity, we define the intra-layer error in each layer as $\varepsilon^l$. According to Equation~\ref{A:eq231}, we get
\begin{align}
    e^L &\leqslant\lVert \textnormal{S}(\bm{\hat{z}}^L) - \textnormal{A}(\bm{\hat{z}}^L)\rVert_2 + \lVert \bm{w}^L \rVert_2 e^{L-1} \nonumber \\
    &= \varepsilon^L + \lVert \bm{w}^L \rVert_2 e^{L-1}.
\end{align}
Also, since we use direct input coding for SNNs, there is no conversion error in the $0$-th layer, and the conversion error in the first layer is given by $e^1 = \varepsilon^1 = \lVert \text{S}(\bm{\hat{z}}^1) - \text{A}(\bm{\hat{z}}^1)\rVert_2$. 
By iteratively applying this relationship across layers, we can derive the error bound for arbitrary layer. The error bound for the final output should be
\begin{align}
    e^L &\leqslant \varepsilon^L + \lVert \bm{w}^L \rVert_2 e^{L-1} \nonumber\\
    &\leqslant \varepsilon^L + \lVert \bm{w}^L \rVert_2 \varepsilon^{L-1} + \lVert \bm{w}^L \rVert_2 \lVert \bm{w}^{L-1} \rVert_2 e^{L-2}\nonumber\\
    &\leqslant \varepsilon^L + \lVert \bm{w}^L \rVert_2 \varepsilon^{L-1} + ... + \lVert \bm{w}^L \rVert_2 ... \lVert \bm{w}^{2} \rVert_2 \varepsilon^{1} \nonumber \\
    &=
    \sum_{l=1}^{L} \left(\prod_{k=l+1}^{L} \lVert \bm{w}^k \rVert_2 \right) \varepsilon^l,~~\left( \text{Define}~\prod_{k=L+1}^{L} \lVert \bm{w}^k \rVert_2 =1 \right)\nonumber \\
    &= \sum_{l=1}^{L} \left (\prod_{k=l+1}^{L} \lVert \bm{w}^k \rVert_2\right ) \left \lVert \textnormal{S}(\bm{\hat{z}}^l; \theta^l) - \textnormal{A}(\bm{\hat{z}}^l) \right \rVert_2
\end{align}
\end{proof}

\section{Proof for Update Rule}
The final update rule for the local threshold balancing algorithm at each step is:
\begin{align}
    \Delta \theta^l &=  - \sum_{i=1}^{N} 2(\hat z^l_i-\theta^l)H(\hat z^l_i-\theta^l), \\
    \theta^l &\leftarrow \theta^l - \eta \Delta \theta^l.
\end{align}
\begin{proof}
    As we have mentioned in the main text, our goal is to optimize the following equation:
    \begin{align}
        \forall~l,~\arg \min_{\theta^l} \left (\prod_{k=l+1}^{L} \lVert \bm{w}^k \rVert_2\right ) \left \lVert \textnormal{C}(\bm{\hat{z}}^l; \theta^l) - \textnormal{A}(\bm{\hat{z}}^l) \right \rVert_2^2.
    \end{align}
    We can apply the gradient descent method to iteratively update the threshold value by subtracting the first-order derivative with respect to the threshold, given by:
    \begin{align}
    \Delta \theta^l 
        &= \frac{\partial \left (\prod_{k=l+1}^{L} \lVert \bm{w}^k \rVert_2\right ) \left \lVert \textnormal{C}(\bm{\hat{z}}^l; \theta^l) - \textnormal{A}(\bm{\hat{z}}^l) \right \rVert_2^2}{\partial \theta^l}.
    \end{align}
    Considering each element $\hat{z}^l_i$ in the vector $\bm{\hat z}^l$, for each $i$, we have:
    \begin{align}
        & \frac{\partial \left (\prod_{k=l+1}^{L} \lVert \bm{w}^k \rVert_2\right ) \left ( \textnormal{C}(\bm{\hat{z}}_i^l; \theta^l) - \textnormal{A}(\bm{\hat{z}}_i^l) \right )^2}{\partial \theta^l} \\
        &= \left\{
        \begin{aligned}
        - \left (\prod_{k=l+1}^{L} \lVert \bm{w}^k \rVert_2\right) \cdot 2(\hat z^l_i-\theta^l) \quad \text{if}~\hat z^l_i>\theta^l\\
        0 \quad\text{if}~\hat z^l_i \leqslant \theta^l 
        \end{aligned}
        \right. \nonumber\\
        &= - \left (\prod_{k=l+1}^{L} \lVert \bm{w}^k \rVert_2\right) \cdot 2(\hat z^l_i-\theta^l)H(\hat z^l_i-\theta^l) \nonumber
    \end{align}
    Therefore, consider the derivative for $\theta^l$ over the whole vector with $N$ elements in total, we have
    \begin{align}
        \Delta \theta^l
        &= \frac{\partial \left (\prod_{k=l+1}^{L} \lVert \bm{w}^k \rVert_2\right ) \sum_{i=1}^{N} \left ( \textnormal{C}(\bm{\hat{z}}_i^l; \theta^l) - \textnormal{A}(\bm{\hat{z}}_i^l) \right )^2}{\partial \theta^l} \\
        &= - \left (\prod_{k=l+1}^{L} \lVert \bm{w}^k \rVert_2\right ) \sum_{i=1}^{N} 2(\hat z^l_i-\theta^l)H(\hat z^l_i-\theta^l).\nonumber
    \end{align}
    Since $\left (\prod_{k=l+1}^{L} \lVert \bm{w}^k \rVert_2\right )$ is a constant with fixed weight matrix, we incorporate this term into the learning rate parameter $\eta$. Consequently, the final update rule can be derived as:
    \begin{align}
        \Delta \theta^l &=  - \sum_{i=1}^{N} 2(\hat z^l_i-\theta^l)H(\hat z^l_i-\theta^l), \\
        \theta^l &\leftarrow \theta^l - \eta \Delta \theta^l.
    \end{align}
\end{proof}

\section{Proof for Pre-Neuron Max pooling Layer}
\begin{theorem*}
The order of max pooling layer and ReLU activation layer does not affect the output results.
\begin{align}
    \max \textnormal{R}(\bm{z}) =  \textnormal{R}(\max\bm{z}), ~\textnormal{when}~\max(\bm{z})>0.
\end{align}
\end{theorem*}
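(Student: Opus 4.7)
The plan is to exploit two elementary facts about ReLU: first, that $\textnormal{R}$ is monotonically non-decreasing on $\mathbb{R}$, and second, that $\textnormal{R}$ acts as the identity on the non-negative reals. Under the hypothesis $\max(\bm{z})>0$, the right-hand side collapses immediately, since $\textnormal{R}(\max \bm{z}) = \max \bm{z}$. The task then reduces to showing that applying ReLU element-wise before taking the maximum also yields $\max \bm{z}$.

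First, I would let $i^\star \in \arg\max_i z_i$ and set $m = z_{i^\star} = \max \bm{z}$. By assumption $m > 0$, so $\textnormal{R}(z_{i^\star}) = m$. This supplies the lower bound $\max_i \textnormal{R}(z_i) \geqslant m$. Next, for the upper bound, I would pick an arbitrary index $j$ and split into cases: if $z_j \geqslant 0$, then $\textnormal{R}(z_j) = z_j \leqslant m$; if $z_j < 0$, then $\textnormal{R}(z_j) = 0 < m$. In either case $\textnormal{R}(z_j) \leqslant m$, so $\max_j \textnormal{R}(z_j) \leqslant m$. Combining the two inequalities gives $\max_j \textnormal{R}(z_j) = m = \textnormal{R}(\max \bm{z})$, which is the claim.

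There is essentially no obstacle here: the statement is a direct consequence of ReLU's monotonicity together with the hypothesis that forces ReLU to act trivially on the maximum. The one subtlety worth flagging in the write-up is why the assumption $\max(\bm{z}) > 0$ is needed: if the maximum were non-positive, then $\max \textnormal{R}(\bm{z})$ would equal $0$, while $\textnormal{R}(\max \bm{z})$ would also equal $0$, so the equality still holds; the assumption is used only to ensure the lower bound $\textnormal{R}(z_{i^\star}) = m$ is tight in the generic case relevant to post-activation pooling, where at least one pre-activation is positive (otherwise the pooled spike output would be trivially zero and the swap is vacuous). Hence the theorem justifies moving the max pooling layer ahead of the spiking neuron without altering the ANN functional behaviour.
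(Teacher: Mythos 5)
Your proof is correct and follows essentially the same route as the paper: both arguments show that, under the hypothesis $\max(\bm{z})>0$, the two sides collapse to $\max\bm{z}$ (the paper via the nested-max identity $\max(\max(\bm{z},\bm{0}))=\max(\max(\bm{z}),0)$, you via a lower/upper bound sandwich on the element-wise ReLU). Your side remark that the equality in fact also holds when $\max(\bm{z})\leqslant 0$ is accurate and a nice touch.
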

\begin{proof}
Since $\textnormal{R}(x)=\max(\bm{x}, 0)$, we can rewrite the left hand side as $\max \left (\textnormal{R}(\bm{z})\right) = \max (\max(\bm{z}, \bm{0})) = \max(\bm{z})$. Similarly, the right-hand side can be written as $\max(\max(\bm{z}), 0) = \max(\bm{z})$, which is equal to the left-hand side. 
\end{proof}

\section{Details for Experiments}

\subsection{Pseudo-code for Full Conversion Pipeline}
In this section, we present the pseudo-code of the full conversion pipeline in Algorithm~\ref{algo:training}. At the start of the conversion process, the model is initialized by replacing all activation layers with clipping function C$(\cdot; \bm\theta^l)$ and the initial threshold values $\bm\theta^l$ for each layer are set to 0. Additionally, all max pooling layers are replaced with pre-neuron max pooling layers, and all other modules are set to inference mode.

During local threshold balancing process, input images are sampled from the training dataset at each iteration and fed into the model. The threshold values can be optimized during forward propagation without global backpropagation. After threshold value optimization, the delayed time is calculated by running another iteration of the model with sampled images from the dataset.

The pseudo-code of the SNN inference process with delayed evaluation technique is presented in Algorithm~\ref{algo:inf}. The delayed time is determined based on the given inference time and the estimated delay time. After $t_0$, the outputs of SNN model are accumulated and the average output value is used as the final prediction.

\begin{algorithm}[t]
	\caption{Efficient ANN-SNN Conversion Algorithm}
	\label{algo:training}
    \textbf{Input}: \\
    ANN model pre-trained weight $\bm{w}$; \\
    Training dataset $\mathcal{D}$;\\
    Iteration steps $K$;\\
    Learning rate $\eta$;\\
    \textbf{Output}: \\
    $\text{SNN}(\cdot;\bm{w}, \bm{\theta})$ \\
    Delayed time $t_0$;
    \begin{algorithmic}[1]
    \STATE \textit{// Initialize model}
        \FOR{$l = 1$ to $L$}
            \STATE Set all activation layer as C($\cdot; \theta^l$)
            \STATE Set pre-neuron max-pooling layer
            \STATE Set the initial value of threshold $\theta^l=0$
            \STATE Set initial weights for SNN as ANN pre-trained weights $\bm{w}$
        \ENDFOR
        \STATE
        \STATE \textit{// Threshold balancing algorithm}
        \STATE step = 0
        \WHILE{$\text{step++} < K$}
            \STATE Sample input images $\bm{x}^0$ in Dataset $D$
            \FOR{$l = 1$ to $L$}
                \STATE $\bm{z}^l$ = $\bm{w}^l \bm{x}^{l-1}$
                \STATE $\bm{x}^l$ = C($\bm{z}^l; \theta^l$)
                \STATE $\Delta \theta^l = - \sum_{i=1}^{N} 2(z^l_i-\theta^l)H(z^l_i-\theta^l)$
                \STATE $\theta^l \leftarrow \theta^l - \eta \Delta \theta^l$
            \ENDFOR
        \ENDWHILE
        \STATE
        \STATE \textit{// Delayed time calculation}
        \STATE Sample input images $\bm{x}^0$ in Dataset $D$
        \STATE $t_0 = 0$
        \FOR{$l = 1$ to $L$}
            \STATE $\bm{z}^l$ = $\bm{w}^l \bm{x}^{l-1}$
            \STATE $
            t_0 = t_0 + \left (\theta^l - v^l(0) \right )/ \left(\max_i\left( \overline{\text{R}(\bm{z}^l)}\right)\right)$
            \STATE $\bm{x}^l$ = C($\bm{z}^l; \theta^l$)
        \ENDFOR
        \STATE
        \STATE \textit{// Initialize SNN model}
        \FOR{$l = 1$ to $L$}
        \STATE $\bm{v}^l(0)\leftarrow \theta^l/2$
        \ENDFOR
        \RETURN $\text{SNN}(\cdot; \bm{w}, \bm{\theta}), t_0$
    \end{algorithmic}
\end{algorithm}

\begin{algorithm}[t]
	\caption{SNN Inference with Delayed Evaluation Strategy}
	\label{algo:inf}
    \textbf{Input}: \\
    SNN model $\text{SNN}(\cdot; \bm{w}, \bm{\theta})$; \\
    Input Image $\bm{x}^0$;\\
    Inference steps $T$;\\
    Delayed time $t_0$;\\
    \textbf{Output}: \\
    Prediction $\bm{o}$;
    \begin{algorithmic}[1]
    \STATE \textit{// SNN inference}
    \STATE step = 0
    \IF{$t_0 > T-4$}
    \STATE $t_0 = T - 4$
    \ENDIF
    \STATE $\bm{o} = \bm 0$
    \FOR{$t = 1$ to $T$}
        \FOR{$l = 1$ to $L$}
            \STATE $\bm{z}^l$ = $\bm{w}^l \bm{x}^{l-1}$
            \STATE $\bm{x}^l$ = S($\bm{z}^l; \theta^l$)
        \ENDFOR
        \IF{$t > t_0$}
        \STATE $\bm{o} = \bm{o} + \bm{x}^l$
        \ENDIF
    \ENDFOR
    \STATE $\bm{o} = \bm{o}/(T-t_0)$
    \STATE
    \STATE \textit{// Reset SNN model}
    \FOR{$l = 1$ to $L$}
    \STATE Reset S($\cdot; \theta^l$)
    \STATE $\bm{v}^l(0)\leftarrow \theta^l/2$
    \ENDFOR
    \RETURN $\bm{o}$
    \end{algorithmic}
\end{algorithm}

\subsection{Image Classification}
When conducting experiments on the ImageNet dataset, we use the pre-trained models from TorchVision. During both the threshold balancing process and inference, we normalize the image to standard Gaussian distribution and Crop the image to size 224$\times$224. The iteration step number of the threshold balancing process is 1000 unless mentioned.

\subsection{Semantic Segmentation}
For the experiments of Pascal VOC 2012 dataset, the weights of the original ANN models are from open-source Github repositories. The data preprocessing operations during both the threshold balancing process and inference process include resizing the data into 256$\times$256 image and normalizing the data value. The delayed evaluation step length is set to half of the inference step length. The iteration step number of the threshold balancing process is set to 4 traversals of the training set for FCN and 5 traversals of the training set for DeepLab. Moreover, Pascal VOC 2012 is augmented by the extra annotations provided by SBD, resulting in 10582 training images.

For the experiments of the MS COCO 2017 dataset, the weights are directly downloaded from TorchVision. When performing data preprocessing, We first resize the input data into 400$\times$400 images and normalize the images. The iteration step number of the threshold balancing process is set to 3 traversals of the training set. Note that these weights were trained on a subset of MS COCO 2017, using only the 20 categories that are present in the Pascal VOC dataset. This subset contains 92518 images for training.

\subsection{Object Detection}
For our object detection experiments, we utilized pre-trained weights from TorchVision. During the threshold balancing process, we use similar dataset augmentation as SSD \citep{liu2016ssd}. The input images are augmented by RandomPhotometricDistort, RandomZoomOut, RandomIoUCrop, and RandomHorizontalFlip. The iteration steps are set to 5000 for each model. During the evaluation of converted models, we directly use normalized images as inputs.

\subsection{Video Classification}
The pre-trained weights for video classification tasks are directly downloaded from TorchVision. We split the original validation set of Kinetics-400 into a new training set and a new test set. The resulting training set contains 12000 videos and the new test set contains 7881 videos. The accuracies are estimated on video-level with parameters frame\_rate=15, clips\_per\_video=5, and clip\_len=16. The frames are resized to 128$\times$171, followed by a central crop resulting into 112$\times$112 normalized frames.

\section{Visualizations on Object Detection and Semantic Segmentation}
In Figure~\ref{fig:demo_det} and Figure~\ref{fig:demo_seg} we present the visualization of the semantic segmentation and object detection results. In each row of the figure, we illustrate the visualization of ground truth, original image (only for semantic segmentation), results from original ANN and results from converted SNN at different time-steps.
\begin{figure*}[t]
\centering
\includegraphics[width=0.99\textwidth]{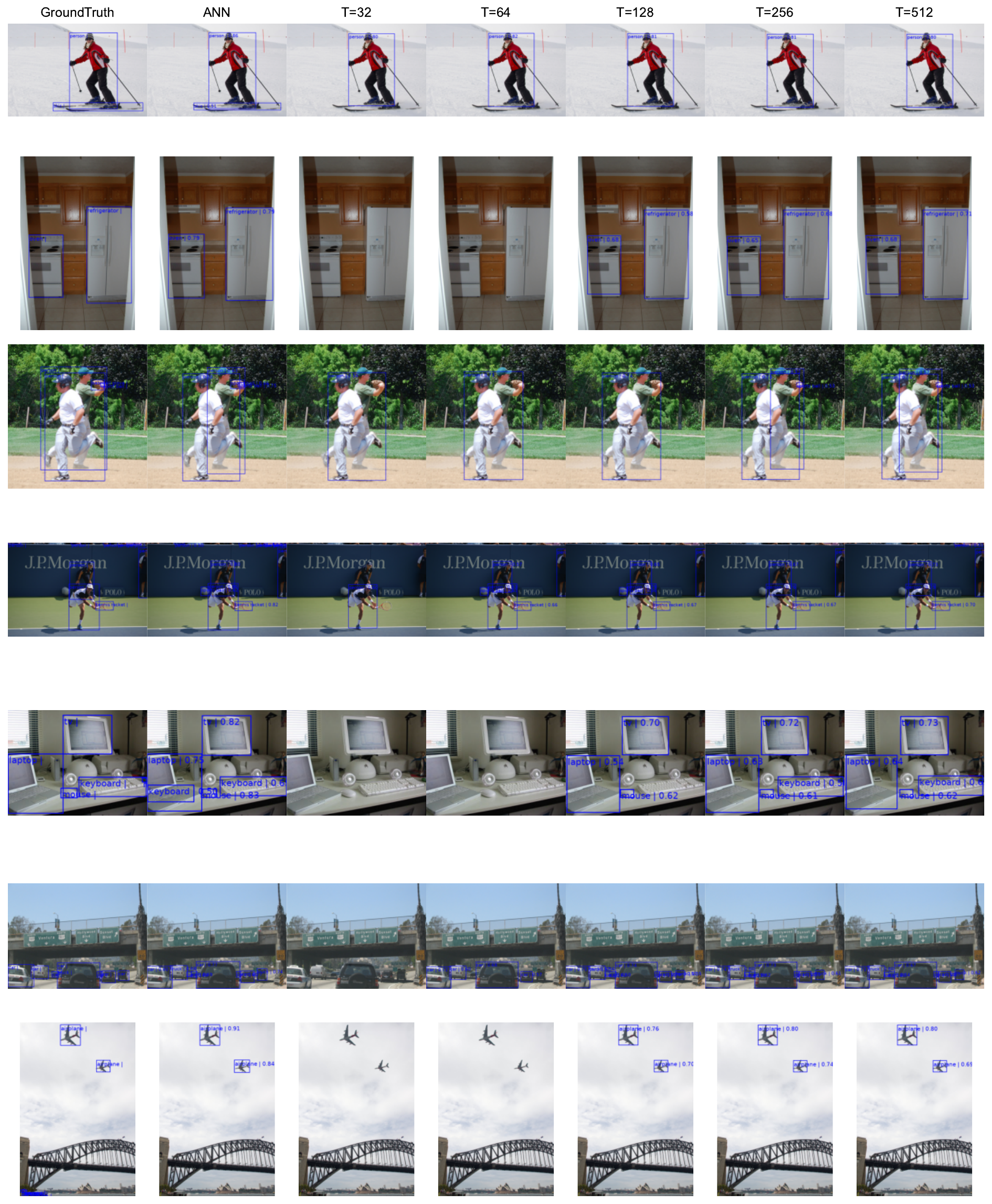}
\caption{Illustration for detection examples of SNNs on different inference steps}
\label{fig:demo_det}
\end{figure*}

\begin{figure*}[t]
\centering
\includegraphics[width=0.99\textwidth]{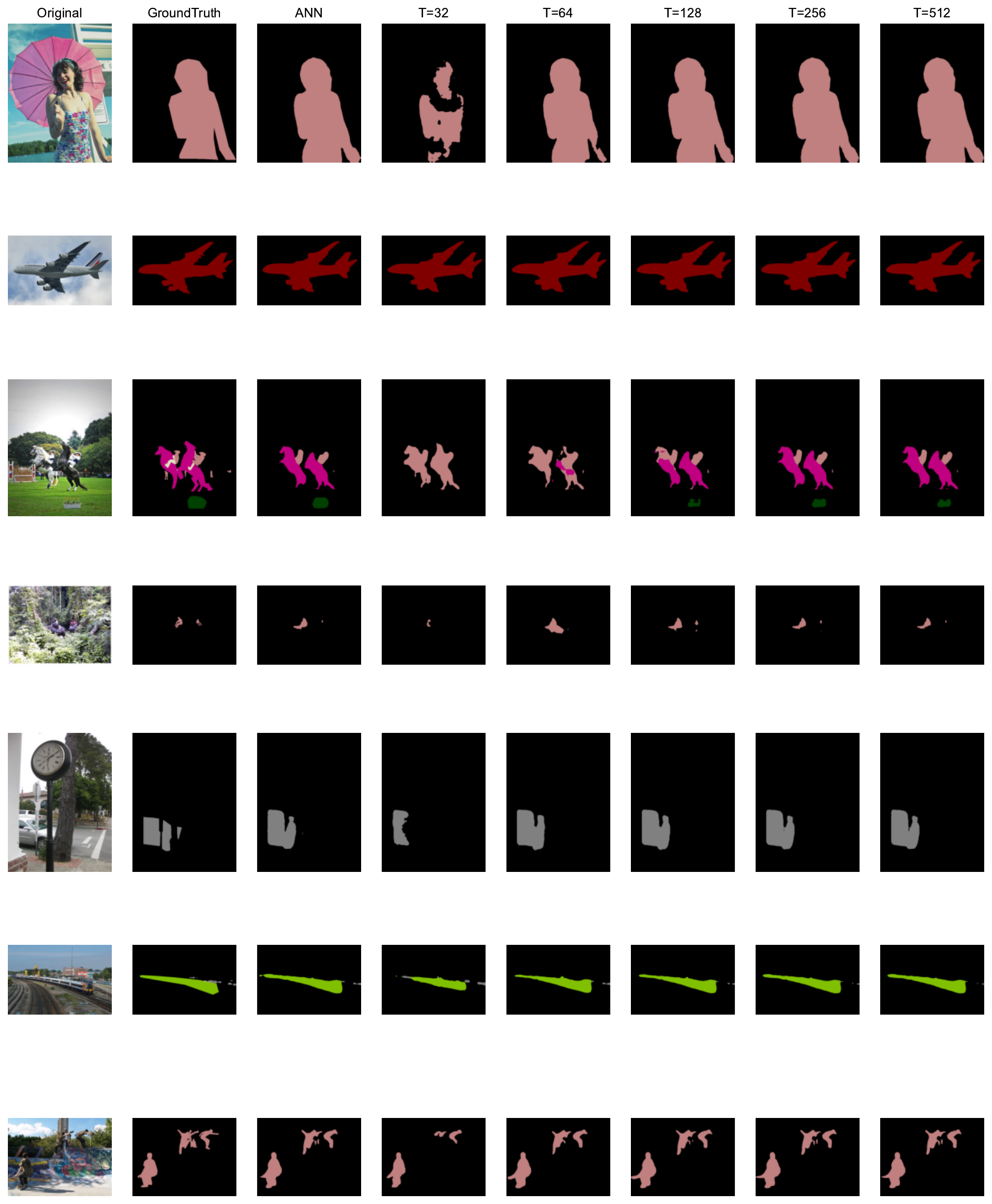}
\caption{Illustration for segmentation examples of SNNs on different inference steps}
\label{fig:demo_seg}
\end{figure*}

\section{Energy Consumption Analysis}
Since the low-power consumption is one of the advantages of SNNs, we calculate the average energy consumption of the converted SNNs and compare it with the energy consumption of the ANN counterparts. We employed a method similar to previous work, estimating the energy consumption of the SNN by calculating the number of Synaptic Operations (SOP). Since the total spike activity of the SNN increases proportionally with the inference time, we define SOP90 and SOP95 as the metrics for converted SNNs on ImageNet. The SOP90/95 denotes the average synaptic operation per image when the accuracy of the converted SNN exceeds 90\%/95\% of the original ANN while SNN90-FPS/W denotes the total number of frames per joule when the performance of the converted SNN exceeds 90\%. In order to further estimate the energy consumption, we utilize the average energy efficiency of 77fJ/SOP for SNN and 12.5pJ/FLOP for ANN \citep{qiao2015reconfigurable} to calculate the required energy for one frame. The detailed comparison is demonstrated in the table below.
\begin{table}[h]
\centering
\setlength\tabcolsep{3pt}
\footnotesize
{
\begin{threeparttable}
\begin{tabular}{ccccccc} \toprule
Architecture & SOPs90 & SOPs95 & ANN-FPS/W & SNN90-FPS/W  \\ \midrule
ResNet-34   & 20.86  & 33.42  & 22        & 622           \\ \midrule
\end{tabular}
\end{threeparttable}
}
\caption{Energy consumption estimation on ImageNet dataset}
\end{table}
For ImageNet classification tasks, using the same ResNet-34 architecture, the SNN is over 28 times more energy efficient than the original ANN while maintaining 90\% performance of the original ANN, achieving an estimation of 622 FPS/W energy efficiency while deployed on neuromorphic hardware. It is worth noticing the SNN can be easily obtained by converting open-source pre-trained ANN models with negligible training cost and then deploy on specific hardware for energy-saving purpose.

\section{Detailed Discussion on Training Cost}
Besides general discussion of the inference-scale complexity of the overall conversion framework, we also demonstrate the efficiency of our method by comparing the total time required for three different post-training conversion methods, including LCP, ACP~\citep{li2024error}, and our method. For all methods, we used the same environment with a 4090 GPU for the evaluation. Here we present the results using ResNet-34 architecture on ImageNet in the below table.

\begin{table}[h]
\centering
\setlength\tabcolsep{3pt}
\footnotesize
{
\begin{threeparttable}
\begin{tabular}{cccccccc} \toprule
Method & T=32   & T=64  & T=128 & T=256 & T=512  \\ \midrule
Ours   & 92.73  & 92.73 & 92.73 & 92.73 & 92.73  \\ \midrule
LCP    & 70.23  & 114.19 & 201.82 & 376.98 & 727.66 \\ \midrule
ACP    & 829.49 & 1218.93 & 2018.56 & 3731.53 & 7059.09 \\ \midrule
\end{tabular}
\end{threeparttable}
}
\caption{Comparison of conversion time with different post-training methods}
\end{table}

It is worth noting that one only needs to run the threshold optimization algorithm once in our method and the obtained SNN can be applied with any simulation time-steps. While in LCP/ACP, calibration is required for every simulation steps. Therefore, in the table above, our algorithm only takes 93 seconds to get the converted SNN and is applicable to any time-steps. Although LCP has an advantage at 32 steps, the required conversion time still increases with the total time-steps and loses its advantage at 64 steps. Moreover, ACP takes much more time because it involves weight updates. This is because our method only requires a similar training cost as ANN inference, which is lower than the computational cost of SNN inference and weight update required in LCP/ACP. 

\end{document}